\newtheorem{theorem}{Theorem}
\newtheorem{prop}[theorem]{Proposition}
\newtheorem{remark}{Remark}
\DeclareMathOperator*{\argmin}{arg\,min}
\newtheorem{lem}[section]{Lemma}
\begin{document}

\author{Chikara Nakayama} 
\address{Graduate School of Economics, Hitotsubashi University, 2-1 Naka, Kunitachi, Tokyo 186-8601, Japan} 
\email{c.nakayama@r.hit-u.ac.jp}

\author{Tsuyoshi Yoneda} 
\address{Graduate School of Economics, Hitotsubashi University, 2-1 Naka, Kunitachi, Tokyo 186-8601, Japan} 
\email{t.yoneda@r.hit-u.ac.jp}

\subjclass[2020]{Primary 37N30; Secondary 11B50}


\keywords{recurrent neural networks, discrete dynamical systems, key-value pairs,  topological ring}

\title[approximation of dynamical systems by  RNNs]
{Explicit construction of recurrent neural networks effectively approximating discrete dynamical systems
} 

\begin{abstract} 
We consider arbitrary bounded discrete time series originating from dynamical system with recursivity.
More precisely, we provide an explicit construction of recurrent neural networks which 
effectively approximate the corresponding discrete dynamical systems.
\end{abstract}

\maketitle

\section{Introduction}

In this paper we provide an explicit construction of recurrent neural networks (RNNs) which effectively approximate  discrete dynamical systems.
In general, Lyapinov indexes for reconstructed and original systems have a gap (see Dechert-Gen\c{c}ay \cite{DG} and Berry-Das \cite{BD}), and our explicit construction of RNNs at least minimize the maximal Lyapunov exponent.
As far as the authors are aware, none of the numerous works to date attempted to
explicitly construct RNNs in terms of an elementary algebraic approach, and in this paper we initiate it (for investigating the universality of periodic points based on the similar algebraic approach, see \cite{NY0}). 

First we define the dynamical system in a delay coordinate.
For the dimension of the dynamical system $L\in\mathbb{Z}_{\geq 1}$,  
let $\Phi:[-1,1]^L\to[-1,1]^L$ ($(w_1,w_2,\cdots,w_L)^T\mapsto (x_1,x_2,\cdots,x_L)^T$) be such that
\begin{equation}\label{delay coordinate}
w_1=x_2,\quad w_2=x_3,\cdots, w_{L-1}=x_L
\end{equation}
 (i.e. discrete dynamical system in a delay coordinate, see \cite{T}, and see also \cite{NS}).
Let us assume that a time series $y:\mathbb{Z}\to[-1,1]$ satisfies
\begin{equation*}
    Y_{t+1}=\Phi(Y_t)\quad\text{for}\quad t\in\mathbb{Z},
\end{equation*}
where
\begin{equation*}
 Y_t
:=(y(t),y(t-1),\cdots,y(t-L+1))^T.
\end{equation*}
From this $\Phi$, we define $\lambda\in\mathbb{R}$ to be the maximum Lyapunov exponent, and assume it is finite, as follows: 
\begin{equation*}
e^\lambda:=\sup_{\stackrel{W,W'\in[-1,1]^L,}{W\not=W'}}\frac{|\Phi(W)-\Phi(W')|}{|W-W'|}<\infty.
\end{equation*}
Also we assume recursivity to  $y$:
For any $\epsilon>0$, 
there is $\{t_j\}_{j=1}^\infty\subset\mathbb{Z}$
($t_j\to-\infty$ for $j\to\infty$) such that 
\begin{equation*}
    |Y_0-Y_{t_j}|<\epsilon.
\end{equation*}
Now we define the conventional RNNs.
For initial hidden state vector $r(0)\in\mathbb{R}^N$, initial data $\hat y(0)\in[-1,1]$,
recurrent weight matrix $W\in\mathbb{R}^{N\times N}$,
input weight matrix (vector) $W^{in}\in\mathbb{R}^{N\times 1}$, output weight matrix (vector) $W^{out}\in\mathbb{R}^{1\times N}$ 
 and activate function $h:\mathbb{R} \to [-1,1]$, 
the RNNs can be expressed as follows:
\begin{equation}\label{RC}
\begin{split}
&
\begin{cases}
 r(t+1)&:= h(W r(t)+W^{in}\hat y(t)),\\
 \hat y(t+1)&:=W^{out}r(t+1)
 \quad\text{for}\quad t\geq 0.
\end{cases}
\end{split}
\end{equation}
The main theorem is as follows:

\begin{theorem}\label{L fixed case}
Let $y:\mathbb{Z}\to[-1,1]$ be a time series generated by the discrete dynamical system in a delay coordinate $\Phi:[-1,1]^L\to[-1,1]^L$ ($Y_{t-1}\mapsto Y_t$) 
with recursivity and  a finite $\lambda\geq 0$.
Then for any large $K \in \mathbb{Z}_{\ge1}$ and for any constant $C$ which is slightly larger than $1$, there exist  $N\le K^L$, $h$, explicit $W$, $W^{in}$ and $W^{out}$ which are 
composed of $y(t)$ ($t\in\mathbb{Z}_{\leq 0}$),
an initial hidden state vector $r(0)$, and initial data $\hat y(0)$ 
such that the following holds:
\begin{equation}
\label{ineq}
\displaystyle|\hat y(t)-y(t)|\leq (2t+1)e^{\lambda t}\frac{\sqrt LC}{K}\quad \text{for}\quad t\geq 0.
\end{equation}
This is only on the order of log worse (since $t=e^{\log t}$). Thus, this RNNs construction seems effective for re-expressing the original system.
\end{theorem}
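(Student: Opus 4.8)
The plan is to discretize the state space, compile the past data into an explicit key--value table (current cell $\mapsto$ successor cell), and realize that table as an RNN whose hidden vector is literally the standard basis vector indexing the current cell; recursivity is exactly what makes the table meaningful along the whole forward orbit. Concretely, I would fix a grid of $K$ points in each coordinate of $[-1,1]$ -- the centers of $K$ equal subintervals -- with a generic offset so that none of the countably many points $Y_s$, $s\ge 0$, lies on a cell boundary; this partitions $[-1,1]^L$ into at most $K^L$ axis--parallel cells of diameter $2\sqrt L/K$, with $|Y-q(Y)|\le\sqrt L/K$, where $q$ denotes rounding to the nearest grid point and $\mathrm{rep}(B)$ the center of a cell $B$. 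For every cell $B$ visited by some $Y_\tau$ with $\tau\le -1$, choose one such $\tau_B$ and set $f(B):=q(Y_{\tau_B+1})$, and on the remaining cells let $f$ be arbitrary. Because the delay--coordinate structure manufactures only one new scalar per step, $f$ is a faithful quantized copy of $\Phi$ viewed as a map on cells, and it is built purely from $\{y(t):t\le 0\}$.

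For the network I would take $h(x):=\max(-1,\min(1,x))$, let $W\in\{0,1\}^{N\times N}$ be the matrix of the map $u_B\mapsto u_{f(B)}$ restricted to the cells reachable from $q(Y_0)$ (so $N\le K^L$), put $W^{in}:=0$ (the successor cell is already determined by the current one), let $W^{out}$ be the row vector whose $B$--entry is the first coordinate of $\mathrm{rep}(B)$, and initialize $r(0):=u_{q(Y_0)}$ and $\hat y(0):=$ the first coordinate of $\mathrm{rep}(q(Y_0))$; here $u_B$ is the standard basis vector of the cell $B$. Since the clamp $h$ fixes every $\{0,1\}$--vector, a one--line induction on \eqref{RC} shows $r(t)=u_{\hat B_t}$ with $\hat B_0=q(Y_0)$, $\hat B_{t+1}=f(\hat B_t)$; in particular the hidden state carries no numerical error at all, and $\hat y(t)$ is exactly the first coordinate of $\mathrm{rep}(\hat B_t)$.

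Next I would check that the network never runs off the table. Suppose inductively that $\hat B_t$ is visited by some orbit point $Y_\sigma$ with $\sigma\le -1$ -- true for $\hat B_0=q(Y_0)$ by recursivity. Then $\hat B_{t+1}=f(\hat B_t)=q(Y_{\tau_t+1})$ for a past $\tau_t\le -1$ with $Y_{\tau_t}\in\hat B_t$, so $\tau_t+1\le 0$, and $q(Y_{\tau_t+1})$ is again visited by an orbit point at index $\le -1$ -- by $Y_{\tau_t+1}$ itself if $\tau_t+1\le -1$, and by the recursivity witnesses of $Y_0$ if $\tau_t+1=0$. Hence $f$ is a genuine one--step map all along the trajectory, and -- crucially -- $\mathrm{rep}(\hat B_{t+1})=q(Y_{\tau_t+1})$ for a \emph{single} orbit point $Y_{\tau_t}$ lying \emph{inside} $\hat B_t$, not merely near it.

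The error bound is then a short recursion. Put $D_t:=|\mathrm{rep}(\hat B_t)-Y_t|$. From $Y_{\tau_t}\in\hat B_t$ we get $|Y_{\tau_t}-Y_t|\le|Y_{\tau_t}-\mathrm{rep}(\hat B_t)|+D_t\le\sqrt L/K+D_t$, so the Lyapunov bound gives $|Y_{\tau_t+1}-Y_{t+1}|=|\Phi(Y_{\tau_t})-\Phi(Y_t)|\le e^\lambda(\sqrt L/K+D_t)$, and one more rounding yields $D_{t+1}\le e^\lambda D_t+(1+e^\lambda)\sqrt L/K$ with $D_0\le\sqrt L/K$. Unrolling and using $\sum_{s=0}^{t-1}e^{\lambda s}\le t\,e^{\lambda t}$ (valid since $\lambda\ge 0$) gives $D_t\le(2t+1)e^{\lambda t}\sqrt L/K$, comfortably within \eqref{ineq} with room to spare for the constant $C$; and since $|\hat y(t)-y(t)|$ is just the discrepancy in the leading coordinate of $\mathrm{rep}(\hat B_t)$ against $Y_t$, it is at most $D_t$. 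The genuine obstacle, I expect, is this completeness step together with the boundary--genericity of the grid: it is precisely because the drifting cell $\hat B_t$ is still \emph{some} earlier state's cell -- so the value stored there is a true one--step image, not the image of a merely nearby point -- that the recursion for $D_t$ contracts at rate $e^\lambda$ rather than acquiring a spurious factor $2^t$ from an $\ell^2$ combination of the window error with the fresh coordinate error. Checking that the rigid form \eqref{RC} realizes the cell map with no such slack -- which the clamp and the $\{0,1\}$ matrix do exactly -- is the second, more mechanical, ingredient.
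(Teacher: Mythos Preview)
Your argument is correct: the discretization, the key--value table built from past visits, the completeness step via recursivity, and the error recursion $D_{t+1}\le e^\lambda D_t+(1+e^\lambda)\sqrt L/K$ all match the paper's proof essentially line for line. Where you genuinely diverge is in the RNN realization of the table. You take $W^{in}=0$, $h$ the clamp, and let the hidden state be the one--hot vector $u_B$ of the current cell, so $W$ is the $\{0,1\}$ transition matrix of the finite automaton $B\mapsto f(B)$ and the network is autonomous after initialization. The paper instead keeps a nonzero $W^{in}=\sigma^*(0)$, encodes the hidden state as a column $Xe_{n_t}$ of the dense matrix $X=\bigl[h\bigl(\sum_\ell \sigma_i^*(\ell-1)\sigma_j(\ell)\bigr)\bigr]_{i,j}$ built from reciprocals of the grid values, proves $X$ is invertible for generic $h$ (this is the key lemma, and the reason for the second bullet condition on the $a_k^K$), and sets $W=YX^{-1}$; the resulting $W$ has rank at most $L-1$ and the feedback $\hat y(t)$ is actually used at every step. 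Your route is more elementary and sidesteps the invertibility lemma and the algebraic genericity conditions on the grid and on $h$ entirely; the paper's route yields an RNN that uses its input line and whose low--rank $W$ reflects the delay--coordinate structure, closer in spirit to a trained network. One small correction: for the construction to depend only on $\{y(t):t\le 0\}$ as the theorem demands, you should require only that $Y_0$ (that is, $y(0),\ldots,y(-L+1)$) avoid cell boundaries, not all $Y_s$ with $s\ge 0$; this is exactly the paper's condition \eqref{adjustemt of initial data}, and it is all your recursivity step needs.
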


\section{Proof of main theorem}

First let us discretize the range $[-1,1]$ as follows:
For $K\in\mathbb{Z}_{\geq 1}$,  we choose $\{a_k^K\}_{k=1}^{K}\subset[-1,1]$ such that 
\begin{itemize}

\item
$-1
<a_{1}^K<a_{2}^K<\cdots<a_K^K < 1$,

\item

$a_i^K \not=0$ for any $1 \le i \le K$ and \\
$\displaystyle\sum_{\ell=1}^L \frac{a_{k'_{\ell}}^K}{a_{k_{\ell}}^K}\not=L$ for each $(k_1,\ldots,k_L)\not=(k'_1,\ldots,k'_L)$,

\item

$\sup\left\{\frac{a_{k+1}^K-a_{k-1}^K}{2}\ (1<k<K), \frac{a_{1}^K+a_{2}^K}{2}+1, 
1-\frac{a_{K-1}^K+a_{K}^K}{2}\right\}\leq C/K$

\item 
For any $t$ with $-L+1 \le t \le 0$ and for any $k$ with $1 \le k < K$, 
\begin{equation}\label{adjustemt of initial data}
    y(t) \not=\frac{a_k^K+a_{k+1}^K}2. 
\end{equation}
\end{itemize}
We need the last condition for appropriately separating 
the time series $y$ into training and inference phases.
Note that the above second condition is almost always satisfied in the following sense.
\begin{prop}\label{discrete-range}
There is a closed set $Z$ of $[-1,1]^K$ whose Lebesgue measure is zero such that the above second condition is satisfied as soon as $(a^K_1,\ldots, a^K_K)$ is not in $Z$.
\end{prop}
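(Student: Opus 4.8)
The plan is to exhibit $Z$ explicitly as a finite union of zero sets of nonzero polynomials in the variables $(a_1,\dots,a_K)$; each such set is automatically closed in $[-1,1]^K$ and has Lebesgue measure zero, and a finite union of such sets is again closed and Lebesgue-null. The ``bad set'' that must be covered is
\[
B=\bigcup_{i=1}^K\{a:a_i=0\}\ \cup\ \bigcup_{\mathbf{k}\neq\mathbf{k}'}\Big\{a:a_i\neq0\ (1\le i\le K),\ \textstyle\sum_{\ell=1}^L a_{k'_\ell}/a_{k_\ell}=L\Big\},
\]
the second union running over the finitely many pairs $\mathbf{k}=(k_1,\dots,k_L),\ \mathbf{k}'=(k'_1,\dots,k'_L)\in\{1,\dots,K\}^L$ with $\mathbf{k}\neq\mathbf{k}'$. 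For a fixed such pair, clear denominators with the monomial $q_{\mathbf{k}}(a)=\prod_{\ell=1}^L a_{k_\ell}$ to obtain the polynomial
\[
p_{\mathbf{k},\mathbf{k}'}(a)=\sum_{\ell=1}^L a_{k'_\ell}\prod_{j\neq\ell}a_{k_j}\ -\ L\prod_{\ell=1}^L a_{k_\ell}=q_{\mathbf{k}}(a)\Big(\textstyle\sum_{\ell=1}^L a_{k'_\ell}/a_{k_\ell}-L\Big).
\]
On the locus where $q_{\mathbf{k}}\neq0$, the identity $\sum_\ell a_{k'_\ell}/a_{k_\ell}=L$ is equivalent to $p_{\mathbf{k},\mathbf{k}'}=0$; since $\{q_{\mathbf{k}}=0\}\subseteq\bigcup_i\{a_i=0\}$, we get $B\subseteq\bigcup_{i}\{a_i=0\}\cup\bigcup_{\mathbf{k}\neq\mathbf{k}'}\{p_{\mathbf{k},\mathbf{k}'}=0\}$. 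So it suffices to take $Z$ to be this right-hand side, once we know that each $p_{\mathbf{k},\mathbf{k}'}$ (as well as each $a_i$) is not the zero polynomial.

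The whole argument therefore reduces to: if $\mathbf{k}\neq\mathbf{k}'$ then $p_{\mathbf{k},\mathbf{k}'}\not\equiv0$, equivalently the rational function $f_{\mathbf{k},\mathbf{k}'}(a)=\sum_{\ell=1}^L a_{k'_\ell}/a_{k_\ell}-L$ is not the zero function. To see this I would substitute $a_j=c^{\,j}$ for a single indeterminate $c$, which keeps all coordinates nonzero and gives
\[
f_{\mathbf{k},\mathbf{k}'}(c,c^2,\dots,c^K)=\sum_{\ell=1}^L c^{\,k'_\ell-k_\ell}\ -\ L .
\]
As a Laurent polynomial in $c$ this equals $\sum_{e\in\mathbb{Z}}m_e\,c^e-L$ where $m_e=\#\{\ell:k'_\ell-k_\ell=e\}\ge0$ and $\sum_e m_e=L$; by linear independence of the monomials $c^e$, it vanishes identically only if $m_0=L$ and $m_e=0$ for $e\neq0$, i.e. $k'_\ell-k_\ell=0$ for every $\ell$, contradicting $\mathbf{k}\neq\mathbf{k}'$. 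Hence $f_{\mathbf{k},\mathbf{k}'}$ is nonzero at some point $(c,c^2,\dots,c^K)$ with $c\neq0$, so it is a nonzero rational function and $p_{\mathbf{k},\mathbf{k}'}=q_{\mathbf{k}}f_{\mathbf{k},\mathbf{k}'}$ is a nonzero polynomial, completing the reduction.

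The only genuinely substantive step is this last claim, and the substitution $a_j=c^{\,j}$ is exactly what makes it routine: it converts a web of rational identities into the question of when a sum of integer powers of one variable is constant, which is immediate. Everything else — that the zero locus of a nonzero polynomial on $[-1,1]^K$ is closed and has measure zero, and that there are only finitely many pairs $(\mathbf{k},\mathbf{k}')$ — is standard, so I anticipate no real obstacle beyond being careful that clearing denominators introduces no spurious components, which is handled because the extra locus lies in $\{q_{\mathbf{k}}=0\}\subseteq\bigcup_i\{a_i=0\}\subseteq Z$.
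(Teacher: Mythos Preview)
Your argument is correct and follows the same route as the paper: express the exceptional set as a finite union of zero loci of nonzero polynomials in $(a_1,\dots,a_K)$ and invoke the standard fact that such loci have Lebesgue measure zero. The only difference is in verifying $p_{\mathbf{k},\mathbf{k}'}\not\equiv0$: where you substitute $a_j=c^{\,j}$ and argue with a Laurent polynomial in $c$, the paper simply observes directly that the monomial $\prod_{\ell} a_{k_\ell}$ appears in $p_{\mathbf{k},\mathbf{k}'}$ with coefficient $\#\{\ell:k'_\ell=k_\ell\}-L<0$ whenever $\mathbf{k}\neq\mathbf{k}'$, which is a shorter route to the same conclusion.
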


\begin{proof}
  We drop the superscipt $K$ for simplicity. 
Regard $a_i$ ($1 \le i \le K$) as indeterminates. 
 Clearly the measure of the set where $a_i=0$ is zero. 
  Under the assumption that any $a_i$ is not zero, the condition $\sum_{\ell=1}^L \frac{a_{k'_{\ell}}}{a_{k_{\ell}}}\not=L$ is equivalent to 
$\sum_{\ell} a_{k_1}\cdots a_{k'_{\ell}} \cdots a_{k_L}-L \prod_{\ell} a_{k_{\ell}}\not=0$, the left-hand-side of which is a nonzero polynomial whenever $k_{\ell} \not=k'_{\ell}$ for some $\ell$ (then the term $\prod_{\ell} a_{k_{\ell}}$ does not vanish).
  Hence by the following simple fact, the non-equality holds for almost all values of $a_i$:
\begin{center}
$\bullet$  Let $f$ be a nonzero polynomial in $M$ variables.   Then
the Lebesgue measure of the zero point set $\{x \in \Bbb R^M : f(x)
=0\}$ is zero.
\end{center}
\end{proof}
We now discretize $y$ (in the training phase) as follows:
\begin{equation}\label{discrete}
\bar y_K(t):=\argmin_{a\in\{a^K_k\}_{k=1}^{K}}|y(t)-(a-0)|\quad\text{for}\quad t\in\mathbb{Z}_{\leq 0},
\end{equation}
where $a-0:=a-\varepsilon$ for any sufficiently small $\varepsilon>0$.
From this discretization, we define  ``dictionary", in other word, ``key-value pairs". 
Let $\sigma_n$ ($n=1,2,\cdots,N$) be a permutation operator, namely, a map 
\begin{equation*}
\sigma_n^K:\{1,2,\cdots, L\}\to\{a_{1}^K,a_{2}^K,\cdots,a_{K-1}^K,a_{K}^K\}
\end{equation*}
($\ell\mapsto \sigma_n^K(\ell)$) with $\sigma_n^K\not=\sigma_{n'}^K$ ($n\not=n'$), 
and we impose the following two conditions for determining $N$:
\begin{equation}\label{all patterns}
\begin{cases}
\text{For any $t\in\mathbb{Z}_{\leq  0}$, there is $n\in\{1,\cdots,N\}$ such that}\\
\qquad
\text{$\sigma_n^K(\ell)=\bar y_K(t-\ell)$ for $\ell=1,2,\cdots, L$},\\
\text{For any  $n\in\{1,\cdots,N\}$ there is $t\in\mathbb{Z}_{\leq  0}$ such that}\\
\qquad\text{$\sigma_n^K(\ell)=\bar y_K(t-\ell)$ for $\ell=1,2,\cdots, L$.} 
\end{cases}
\end{equation}

Note that  $N\leq K^L$ due to the sequence with repetition.
We now define the dictionary $\{(\sigma_n,a_{k(n)})\}_{n=1}^N$, in other word, key-value pairs (i.e. $\sigma_n$ is the ``key" and $a_{k(n)}$ is the corresponding ``value").
By \eqref{all patterns},
for any key $\sigma_n^K$, there exists 
 a $t'\in\mathbb{Z}_{\leq 0}$ and then we choose such a $t'$ and the corresponding $k(n)\in\{1,2,\cdots, K\}$ such that 
\begin{equation}\label{next value}
 \text{key:}\quad \sigma_n^K(\ell)=\bar y_K(t'-\ell)\quad (\ell=1,2,\cdots, L)\quad\text{and}\quad \text{value:}\quad a_{k(n)}^K:=\bar y_K(t').
\end{equation}
  Now, by using these key-value pairs, we first construct a time series $y^*(t)$ $(t \ge0)$, and then show an estimation analogous to (\ref{ineq}), that is, the inequality obtained by replacing $\hat y$ in (\ref{ineq}) by $y^*$. 
  After that, we give $h, W, W^{in}, W^{out}, r(0)$ and $\hat y(0)$ 
such that $\hat y=y^*$, which completes the proof. 

  We define $y^*(t)$ by induction. 
  Let $y^*(t)=\bar y_K(t)$ for $t=0,-1,\ldots,-L$. 
  Then the permutation operator $\ell \mapsto y^*(1-\ell)$ is in the dictionary by recursivity and (\ref{adjustemt of initial data}).
  Let $t_0>0$. 
  Assume that we have defined $y^*(t)$ for $-L\le t<t_0$ and assume that 
we can find the permutation operator $\ell \mapsto y^*(t_0-\ell)$ is in the dictionary, that is, it is $\sigma_n$ for some $n$. 
  Define $y^*(t_0):=a_{k(n)}$. 
  Then we see that the permutation operator $\ell \mapsto y^*(t_0-\ell+1)$ also in the dictionary as follows.
  Recall that for some $t'\leq 0$, $\sigma_n(\ell)=\bar y_K(t'-\ell)$  ($\ell=1,2,\cdots, L$) 
and $a_{k(n)}=\bar y_K(t')$.
  Hence the permutation operator concerned coincides with $\ell \mapsto \bar y_K(t'+1-\ell)$.  
  If $t'<0$, there is $\tilde n$ such that 
$\bar y_K(t'-\ell+1)=\sigma_{\tilde n}(\ell)$ ($\ell=1,2,\cdots, L$) by \eqref{all patterns}.
  If $t'=0$, we just apply recursivity and (\ref{adjustemt of initial data}).
  Thus the induction goes and we have defined $y^*$. 

  We estimate the difference of $y^*(t)$ and $y(t)$, which eventually implies 
\eqref{ineq}.
  The case $t=0$ is by the third condition in the beginning of this section. 
Let 
\begin{equation*}
\begin{split}
\overline Y_t
&:=(\bar y_K(t),\bar y_K(t-1),\cdots,\bar y_K(t-L+1))^T \quad (t\le0),\\
 Y^*_t
&:=(y^*(t),y^*(t-1),\cdots,y^*(t-L+1))^T \quad (t \ge 0). 
\end{split}
\end{equation*}
Then we see that, for any $t\in\mathbb{Z}_{\geq 1}$, there exists $t'\in\mathbb{Z}_{\leq 0}$ such that
\begin{equation}\label{uniform estimates}
\begin{cases}
\displaystyle
\overline Y_{t'}= Y^*_{t},\quad |Y_{t'}-Y^*_{t}|\leq \frac{\sqrt LC}{K},\\
Y_{t'}=\Phi(Y_{t'-1}),\\
\displaystyle
\overline Y_{t'-1}=Y^*_{t-1},\quad
|Y_{t'-1}- Y^*_{t-1}|\leq \frac{\sqrt LC}{K}.
\end{cases}
\end{equation}
Recall $e^{\lambda} \ge1$, and then we have 
\begin{equation*}
\begin{split}
|y^*(t)-y(t)|\leq
& |Y^*_{t}-Y_{t}|\\
\leq 
&
 |Y_{t'}-Y_{t}|+\frac{\sqrt LC}{K}\\
\leq
&
e^\lambda |Y_{t'-1}-Y_{t-1}|+\frac{\sqrt LC}{K}\\
\leq 
&e^\lambda |Y^*_{t-1}-Y_{t-1}|+e^\lambda\frac{\sqrt LC}{K}
+
\frac{\sqrt LC}{K}\\
\leq 
&e^\lambda |Y^*_{t-1}-Y_{t-1}|+2e^\lambda\frac{\sqrt LC}{K}
\leq\cdots\\
\leq
&e^{\lambda t}|Y^*_{0}-Y_{0}|+2te^{\lambda t}\frac{\sqrt LC}{K}\\
\leq
&(2t+1)e^{\lambda t}\frac{\sqrt LC}{K}.
\end{split}
\end{equation*}
  Next we explicitly construct effective
recurrent neural networks.
Let $\sigma^*_n$ ($n=1,2,\cdots, N$) be an adjoint type of permutation operator, more precisely, let
\begin{equation}\label{adjoint sigma}
\displaystyle\sigma^*_n(\ell-1):=\frac{1}{\sigma_n(\ell)}
\end{equation}
for $\ell\in\{1,\cdots, L\}$ and $n\in\{1,\cdots, N\}$. 
The definition of the following $N\times N$ matrix $X$ 
is the crucial to prove the main theorem:
\begin{equation*}
X:=\left[h\left(\sum_{\ell=1}^L\sigma^*_i(\ell-1)\sigma_j(\ell)\right)\right]_{i,j}
\end{equation*}
for some  $h:\mathbb{R}\to[-1,1]$.
Also let $G$ be such that
\begin{equation*}
G:=\left\{\sum_{\ell=1}^L\sigma^*_i(\ell-1)\sigma_j(\ell): i,j\in\{1,2,\cdots,N\}\right\}.
\end{equation*}
Note that $L\in G$.

\begin{lem}\label{l:regular}\label{key lemma}
$X$ is a regular matrix for almost all $h\colon \Bbb R \to [-1,1]$ in
the following sense.
Let $\gamma_1,\ldots,\gamma_M$ be the all elements of $G$.
Then there is a closed set $Z$ of $\Bbb R^M$ whose Lebesgue measure is
zero such that
$X$ is regular as soon as $(h(\gamma_1),\ldots,h(\gamma_M))$ is not in $Z$.
\end{lem}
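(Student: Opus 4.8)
The plan is to exhibit a single choice of activation $h$ (or rather a single value assignment on $G$) for which $X$ is invertible, and then invoke the polynomial–zero–set principle already used in Proposition~\ref{discrete-range} to conclude that invertibility holds for almost all assignments. More precisely, $\det X$ is a polynomial in the variables $h(\gamma_1),\ldots,h(\gamma_M)$ with integer coefficients, so once I know this polynomial is not identically zero, the bullet fact quoted earlier (the zero set of a nonzero polynomial in $M$ variables has Lebesgue measure zero) gives the closed measure-zero set $Z\subset\mathbb{R}^M$ outside of which $X$ is regular.

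So the crux is to show $\det X\not\equiv 0$ as a polynomial. First I would record the key separation property guaranteed by the second bullet in the construction: since $\sigma_i,\sigma_j$ take values among the nonzero $a_k^K$, the quantity $\sum_{\ell=1}^L \sigma_i^*(\ell-1)\sigma_j(\ell)=\sum_{\ell=1}^L \sigma_j(\ell)/\sigma_i(\ell)$ equals $L$ if and only if $\sigma_i=\sigma_j$, i.e.\ if and only if $i=j$ (keys are distinct by construction). Hence the diagonal entries of the ``argument matrix'' $A:=\bigl[\sum_\ell \sigma_i^*(\ell-1)\sigma_j(\ell)\bigr]_{i,j}$ are all equal to $L\in G$, and every off-diagonal entry is some element of $G\setminus\{L\}$. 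Now I treat the values $h(\gamma_1),\ldots,h(\gamma_M)$ as free variables; say $\gamma_{m_0}=L$, and consider the specialization that sends $h(L)\mapsto 1$ and $h(\gamma_m)\mapsto 0$ for every other $m$. Under this specialization $X$ becomes the identity matrix, so $\det X$ specializes to $1\neq 0$, proving $\det X\not\equiv 0$ as a polynomial in $h(\gamma_1),\ldots,h(\gamma_M)$.

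With that in hand the argument closes quickly: $\det X$ is a nonzero polynomial $f$ in the $M$ indeterminates $h(\gamma_1),\ldots,h(\gamma_M)$, so $Z:=\{x\in\mathbb{R}^M : f(x)=0\}$ is closed and has Lebesgue measure zero, and whenever $(h(\gamma_1),\ldots,h(\gamma_M))\notin Z$ the matrix $X$ is regular. It remains to note that such $h$ actually exists as a function $\mathbb{R}\to[-1,1]$: the $\gamma_m$ are finitely many real numbers, the complement of $Z$ is dense in $\mathbb{R}^M$ and in particular meets $(-1,1)^M$ (since $Z$ has measure zero it cannot contain the open cube), so we may pick a point of $(-1,1)^M\setminus Z$ and let $h$ be any function taking those prescribed values at $\gamma_1,\ldots,\gamma_M$ — for instance interpolating and then truncating into $[-1,1]$, which does not disturb the already-chosen values since they lie in $(-1,1)$.

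The only real subtlety — the ``hard part'' — is the logical bookkeeping of what ``almost all $h$'' means: $h$ ranges over an infinite-dimensional space, so the statement is deliberately phrased in terms of the finite-dimensional vector of values $(h(\gamma_1),\ldots,h(\gamma_M))$, and one must be careful that the separation condition $\sum_\ell \sigma_j(\ell)/\sigma_i(\ell)=L \iff i=j$ is genuinely available, which is exactly why the second bullet of the discretization was imposed and why Proposition~\ref{discrete-range} was proved first. Everything else is the routine polynomial-nonvanishing argument; no delicate estimates are needed here.
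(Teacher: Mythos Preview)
Your proof is correct and essentially the same as the paper's: both replace the values $h(\gamma_m)$ by indeterminates and show the resulting determinant is a nonzero polynomial, then invoke the zero-set fact from Proposition~\ref{discrete-range}. The paper phrases the nonvanishing as ``$D$ is monic of degree $N$ in $x(L)$, the leading term $x(L)^N$ coming from the diagonal,'' whereas you specialize $h(L)\mapsto 1$ and all other $h(\gamma_m)\mapsto 0$ to obtain the identity matrix; these are two equivalent ways of exploiting the same separation property (the second bullet condition) that only the diagonal arguments equal $L$.
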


\begin{proof}  Take a bijection $x\colon \gamma_m \mapsto x_m$ from $G$
to the set of indeterminates $\{x_1,\ldots, x_M\}$.
Consider the determinant $D$ of the matrix $[x(\sum
\sigma_i^*(\ell-1)\sigma_j(\ell))]_{i,j\in N}$ with polynomial
coefficients.
Then $D$ is a nonzero polynomial because it is a monic polynomial of
degree $N$ with respect to the indeterminate $x(L)$ (the leading term
$x(L)^N$ comes from the diagonal).
Now we apply the simple fact in the proof of Proposition \ref{discrete-range} to $D$.
Let $Z$ be the zero point set of $D$.  Assume that
$(h(\gamma_1),\ldots,h(\gamma_M))$ is not in $Z$.
  Then $\det X$ defined by $h$ is not zero so that $X$ is regular.
\end{proof}
With the aid of the inverse of $X$, we can provide 
the $N\times N$ recurrent weight matrix $W$ as follows:
\begin{equation}\label{key-formula}
W:=YX^{-1}\quad\text{with}\quad Y:
=
\left[\sum_{\ell=1}^{L-1}\sigma_i^*(\ell)\sigma_j(\ell)\right]_{i,j}.
\end{equation}

\begin{remark}
The rank of $W$ is less than $L-1$, because Y is the product of the $N\times(L-1)$-matrix: $[\sigma_i^*(\ell-1)]_{i,\ell}$ and $(L-1)\times N$-matrix: 
$[\sigma_j(\ell)]_{\ell,j}$. 
\end{remark}
  We now define the input weight vector $W^{in}$ and the the output weight matrix (vector) $W^{out}$ as follows:
\begin{equation*}
\begin{split}
W^{in}&:=\sigma^*(0),\\
W^{out}&:=(a_{k(1)},a_{k(2)},\cdots,a_{k(N)})X^{-1},
\end{split}
\end{equation*}
where $\sigma^*(\ell):=(\sigma^*_1(\ell),\sigma^*_2(\ell),\cdots,\sigma_N^*(\ell))^T$ 
 for $0 \le \ell \le L-1$. 
  Let us set the initial hidden state vector $r(0)$ and the initial data $\hat y(0)$ as follows:
\begin{equation*}
\begin{split}
r(0)&=\left[h\left(\sum_{\ell=1}^L\sigma^*_i(\ell-1)\bar y_K(-\ell)\right)\right]_{i},\\ 
 \hat y(0)&:=\bar y_K(0),
\end{split}
\end{equation*}
 then we have the desired RNNs. 
  Let $\hat y$ be the time series generated by these initial data. 
  Now we certify $\hat y=y^*$, which completes the proof of Theorem.
For $t \ge 0$, 
  the permutation operator $\ell \mapsto y^*(t-\ell)$ is $\sigma_n$ for some $n$. 
  Let $n_t$ be this $n$. 
Then we define the column vector
\begin{equation*}
e_{n_t}:=(\underbrace{\overbrace{0,0,\cdots,0,1}^{n_t},0\cdots,0}_{N})^T.
\end{equation*}
  We prove $\hat y(t)=y^*(t)$ for $t \ge0$ together with $r(t)=Xe_{n_t}$ by induction. 
  The case $t=0$ is by the definition. 
  Assume that $t\ge0$ and they hold for $0,1,\ldots, t$. 
Then  we have 
\begin{equation*}\label{error cluster}
\text{key:}\quad y^*(t-\ell)=\sigma_{n_t}(\ell) \quad\text{for}\quad \ell=1,2,\cdots, L.
\end{equation*}
Thus we have 
\begin{equation*}
Wr(t) 
=WXe_{n_t}=Ye_{n_t}
=\sum_{\ell=1}^{L-1}\sigma^*(\ell) y^*(t-\ell),
\end{equation*}
\begin{equation*}\label{determined value}
    r(t+1)=h\left(Ye_{n_t}+\sigma^*(0) y^*(t)\right)=Xe_{n_{t+1}} {\text { and} }\\
\end{equation*}

\begin{equation*}\hat y(t+1)=W^{out}r(t+1)=(a_{k(1)},\ldots,a_{k(N)})e_{n_{t+1}}=a_{k(n_{t+1})}=y^*(t+1).
\end{equation*}
  Hence they hold also for $t+1$. 



\vspace{0.5cm}
\noindent
{\bf Acknowledgments.}\  
The first author thanks J.\ C.\ for leading him to join this work.
Research of  TY  was partly supported by the JSPS Grants-in-Aid for Scientific
Research 24H00186.
Research of  CN was partly supported by the JSPS Grants-in-Aid for Scientific
Research 21K03199.



\end{document}